\newcommand{\ds}{\displaystyle}
\newcommand{\R}{\mathbb{R}}
\newcommand{\E}{\mathbb{E}}
\renewcommand{\P}{\mathbb{P}}
\newcommand{\relu}[1]{(#1)_+}
\newcommand{\pred}[1]{\llbracket #1 \rrbracket}
\newcommand{\fspace}{\mathcal{Z}}
\newcommand{\tspace}{\mathcal{T}}
\newcommand{\bids}{\mathbf{b}}
\newcommand{\asks}{\mathbf{c}}
\newcommand{\hb}{b^{(1)}}
\newcommand{\hob}{b^{(2)}}
\newcommand{\secp}{\bar{c}}
\newcommand{\mr}{{\sf MR}}
\newcommand{\sw}{{\sf SW}}
\DeclareMathOperator*{\argmax}{arg\,max}
\newtheorem{definition}{Definition}
\newtheorem{prop}[definition]{Proposition}
\newtheorem{cor}[definition]{Corollary}
\title{Learning to Clear the Market}
\author[1]{Weiran Shen\footnote{Corresponding author. 
\texttt{emersonswr@gmail.com}}}
\author[2]{S\'{e}bastien Lahaie}
\author[2]{Renato Paes Leme}
\affil[1]{Tsinghua University, Beijing, China}
\affil[2]{Google Research, New York, New York, USA}
\begin{document}

\maketitle

\begin{abstract}
  The problem of market clearing is to set a price for an item such
  that quantity demanded equals quantity supplied. In this work, we
  cast the problem of predicting clearing prices into a learning
  framework and use the resulting models to perform revenue
  optimization in auctions and markets with contextual information.
  The economic intuition behind market clearing allows us to obtain
  fine-grained control over the aggressiveness of the resulting
  pricing policy, grounded in theory. To evaluate our approach, we fit
  a model of clearing prices over a massive dataset of bids in display
  ad auctions from a major ad exchange. The learned prices outperform
  other modeling techniques in the literature in terms of revenue and
  efficiency trade-offs. Because of the convex nature of the clearing
  loss function, the convergence rate of our method is as fast as
  linear regression.
\end{abstract}

%%%
%%%
%%%

\section{Introduction}

A key difficulty in designing machine learning systems for revenue
optimization in auctions and markets is the discontinuous nature of
the problem. Consider the basic problem of setting a reserve price in
a single-item auction (e.g., for online advertising): revenue steadily
increases with price up to the point where all buyers drop out, at
which point it suddenly drops to zero. The discontinuity may average
away over a large market, but one is typically left with a highly
non-convex objective.

We are interested in obtaining pricing policies for revenue
optimization in a data-rich (i.e., contextual) environment, where each
product is associated with a set of features. For example, in online
display advertising, a product is an ad impression (an ad placement
viewed by the user) which is annotated with features like
geo-information, device type, cookies, etc. There are two main
approaches to reserve pricing in this domain: one is to divide the
feature space into well defined clusters and apply a traditional
(non-contextual) revenue optimization algorithm in each
cluster~\cite{myerson1981optimal,dhangwatnotai2015revenue,roughgarden2016minimizing,paes2016field}.
This is effectively a semi-parametric approach with the drawback that
an overly fine clustering leads to data sparsity and inability to
learn across clusters. An overly coarse clustering, on the other hand,
does not fully take advantage of the rich features available.

To overcome these difficulties, a natural alternative is to fit a
parametric pricing policy by optimizing a loss function. The first
instinct is to use revenue itself as a loss function, but this loss is
notoriously difficult to optimize because it is discontinuous,
non-convex, and has zero gradient over much of its domain---so one
must look to surrogates. \citet{medina2014learning} propose a
continuous surrogate loss for revenue whose gradient information is
rich enough to optimize for prices. The loss is nevertheless
non-convex so optimizing it relies on techniques from constrained
DC-programming, which have provable convergence but limited
scalability in high-dimensional contexts.

%\paragraph{Main contribution.}
\textbf{Main contribution.}
The main innovation in this paper is to address the revenue
optimization problem by instead looking to the closely related problem
of market clearing: how to set prices so that demand equals supply.
The loss function for market clearing exhibits several nice properties
from a learning perspective, notably convexity. The market clearing
objective dates back to the economic theory of market
equilibrium~\cite{arrowdebreu1954}, and more recently arises in the
literature on iterative
auctions~\cite{bikhchandani1997competitive,gul1999walrasian,ausubel2006efficient}.
To our knowledge, our work is the first to use it as a loss function
in a machine learning context.

The economic insight behind the market-clearing loss function allows us to adapt its
shape to control how conservative or aggressive the
resulting prices are in extracting revenue. To increase price levels,
we can artificially increase demand or limit supply, which connects
revenue optimization theorems from computational economics~\cite{bullow1996auction,roughgarden2012supply} to
regularization techniques under our loss function.

We begin by casting the problem of market clearing as a learning
problem. Given a dataset where each record corresponds to an item
characterized by a feature vector, together with buyer bids and seller
asks for the item, the goal of the pricing policy is to quote a price
that balances supply and demand; with a single seller, this simply
means predicting a price in between the highest- and second-highest
bids, which intuitively improves over the baseline of no reserve
pricing.

% The goal of the algorithm is to learn a pricing
% function $p(x)$ that quotes a price for each item and then matches
% sellers show ask price is below the price with buyers whose bids are
% above. We want to optimize $p(\cdot)$ so that the market is efficient
% as possible, i.e., it maximizes the sum of bid minus ask prices of the
% cleared transactions. The non-contextual version of this problem is
% extensively studied in the literature
% \cite{kelso1982job,gul1999walrasian,bikhchandani1997competitive,ausubel2006efficient}.
% A typical approach is to write the allocation problem as a linear
% program and obtain the optimal prices as the optimal dual variable.
% Ausubel \cite{ausubel2006efficient} goes one step further and turns
% the dual into an unconstrained convex minimization problem. Our first
% observation it is that it is simple to adapt Ausubel's convex
% minimization formulation to the contextual setting.

This offers us a general framework for price optimization in
contextual settings, but the objective function of market clearing is
still disconnected from revenue optimization. Revenue is the aggregate
price paid by buyers, while market clearing is linked to the problem
of optimizing efficiency (realized value). Efficiency can be measured
as social welfare (the total value of the allocated items), or more
coarsely via the match rate (the number of cleared transactions). The
platform faces a tension between trying to extract as much revenue as
possible from buyers, while also leaving them enough surplus to
discourage a move to competing platforms.

To better understand the trade-off between revenue and efficiency, we
consider the linear programming duality between allocation and pricing
and observe that a natural parameter that trades-off revenue for
efficiency is the available supply. Artificially limiting supply (or
increasing demand) allows one to control the aggressiveness of the
resulting clearing prices output by the model.
% A central idea in Bulow-Klemperer
% type theorems is that adding artificial competition by either adding
% extra artificial bids or restricting the supply can achieve revenue
% close to that of the revenue-optimal auction. For instance, adding a
% single bidder to an auction drawn from the same distribution as its
% competitors generates at least as much revenue as the optimal auction
% with the original number of bidders. 
This fundamental idea has been used multiple times more recently in
algorithmic game theory to design approximately revenue-optimal
auctions~\cite{HR09,dhangwatnotai2015revenue,roughgarden2012supply,eden2017competition}.
Translating this intuition to our setting, a simple modification of
the primal (allocation) linear program has the effect of restricting
the supply. In the dual (pricing) linear program, this is equivalent
to adding a regularization to the market-clearing objective function.

% We note that the auction
% mechanism is different from take-it-or-leave-it price offers in
% plain market equilibrium settings. Nevertheless the revenue function
% is essentially the same.

The focus of this paper is empirical. 
As our main application, we use this methodology to optimize reserve
prices in display advertising auctions.
We demonstrate the efficacy of
the market clearing loss for reserve pricing by
experimentally comparing it with other strategies on a real-world data
set. Coupled with the experimental evaluation, we establish some
theoretical guarantees on match rate and efficiency for the optimal
pricing policy under clearing loss.
The theory provides guidance on how to set the regularization
parameters and we investigate how this translates to the desired trade-offs
experimentally.

%\paragraph{Experimental results}
\textbf{Experimental results.}
We evaluate our method against a linear-regression based approach on a
dataset consisting of over 200M auction records from a major display
advertising exchange. The features are represented as 84K-dimensional
sparse vectors and contain information such as the website on which
the ad will be displayed, device and browser type, and country of
origin.
As benchmarks we consider standard linear regression on either the
highest or second-highest bid, and models fit using the surrogate
revenue loss proposed by~\citet{medina2014learning}.
We find that our method Pareto-dominates the benchmarks in terms of
the trade-off between revenue and match rate or social welfare.
For example, for the best
revenue obtained from regression approach, we can obtain a pricing
function with at least the same revenue but 5\% higher social welfare
and 10\% higher match rate. 
%
% For the same level of social welfare (e.g.
% 95\% of optimal) the clearing loss obtains 10\% more revenue than the
% regression-based approaches. 
%
We also find that the convergence rate of
fitting models under our loss function is as fast as a standard linear
regression. In comparison, the surrogate loss of
\citet{medina2014learning} has much slower convergence due to its
non-convexity.

%\paragraph{Related work}
\textbf{Related work.}
There is a large body of literature on
learning algorithms for optimizing revenue, however, most of the
literature deals with the non-contextual setting.
\citet{cole2014sample,morgenstern2015pseudo,morgenstern2016learning,paes2016field}
study the batch-learning non-contextual problem. \citet{roughgarden2016minimizing} study
the non-contextual problem both in the online and batch learning
settings. \citet{cesa2013regret} study it as a non-contextual online
learning problem. Finally, there has been a lot of recent interest in the
contextual online learning version
\cite{amin2014repeated,cohen2016feature,mao2018contextual}, but those
ideas are not applicable to the batch-learning setting.

Closest to our work are  \citet{medina2014learning} and
\citet{MedinaV17}, who also study contextual reserve price optimization in a
batch-learning setting. \citet{medina2014learning} proves generalization bounds,
defines a surrogate loss as a continuous approximation to the revenue loss, and
proposes an algorithm with provable convergence based on DC programming. The
algorithm, however, requires solving a convex program in each iteration.
\citet{MedinaV17} propose a clustering based approach, which involves the
following steps: learning a least-square predictor of the bid, clustering the
feature space based on the linear predictor, and optimizing the reserve using a
non-contextual method in each cluster.

% Related work:
% \begin{itemize}
% \item Indices, price/ad indices (e.g., Ezoic).
% \item Bid landscape forecasting \cite{cui2011bid}.
% \item Loss functions and their properties (Reid-Williamson) \cite{reid2009surrogate}.
% \item Hedonic regression (in marketing) \cite{rosen1974hedonic}.
% \item Hayek: use of knowledge in society \cite{hayek1945use}.
% \end{itemize}

%%% Local Variables:
%%% mode: latex
%%% TeX-master: "tfclearing"
%%% End:

%%%
%%%
%%%

\section{Market Clearing Loss}

This section introduces our model, proceeding from the general to the
specific. We first explain the duality between allocation and pricing,
which motivates the form of the loss function to fit clearing prices,
and provides useful economic insights into how the input data defines
its shape. We next define the formal problem of learning a clearing
price function in an environment with several buyers and sellers. We
then specialize to a single-item, second-price auction (multiple
buyers, single seller).
%  and observe that predicting clearing prices
% provides a practical way to fit reserve prices while retaining some degree of
% control on the match rate.

\subsection*{Allocation and Pricing}

We consider a market with $n$ buyers and $m$ sellers who aim to trade
quantities of an item (e.g., a stock or commodity) among themselves.
Each buyer $i$ is defined by a pair $(b_i, \mu_i)$ where $b_i \in \R_+$
is a bid price and $\mu_i \in \R_+$ is a quantity. The interpretation is
that the buyer is willing to buy up to $\mu_i$ units of the item at a
price of at most $b_i$ per unit. Similarly, each seller $j$ is defined
by a pair $(c_j, \lambda_j)$ where $c_j \in \R_+$ is an ask price and
$\lambda_j \in \R_+$ is the quantity of item the seller can supply. The ask
price can be viewed as a cost of production, or as an outside offer
available to the seller, so that the seller will decline to sell item units
for any price less than its ask.

The allocation problem associated with the market is to determine
quantities of the item supplied by the sellers, and consumed by the
buyers, so as to maximize the \emph{gains from trade}---value consumed
minus cost of production. Formally, let $x_i$ be the quantity bought
by buyer $i$ and $y_j$ the quantity sold by seller $j$. The optimal
gains from trade are captured by the (linear) optimization problem:
\begin{eqnarray}
  \max_{0 \leq x_i \leq \mu_i, 0 \leq y_j \leq \lambda_j}
  & \ds \sum_{i=1}^n b_i x_i - \sum_{j=1}^m c_j y_j & \nonumber \\
\mbox{s.t.} &  \ds \sum_{i=1}^n x_i = \sum_{j=1}^m y_j & \label{cons:supply-demand}
\end{eqnarray}
The optimization is straightforward to solve: the highest bid is
matched with the lowest ask, and the two agents trade as much as
possible between each other. The process repeats until the highest bid
falls below the lowest ask.
% \footnote{This is a familiar process in
%   financial exchanges where the item is a security. There our concept
%   of a market corresponds to an \emph{order book} and the buyers and
%   sellers represent limit orders in the book~\cite{harris2003trading}.} 
%
The purpose of the linear programming formulation is to consider its
dual, which corresponds to a pricing problem:
%
% The purpose of formulating the allocation problem as a linear program
% is to consider its dual, which corresponds to a pricing problem:
%
\begin{equation}
  \label{eq:clearing-loss}
\min_p \:\: \sum_{i=1}^n \mu_i\relu{b_i - p} + \sum_{j=1}^m \lambda_j\relu{p - c_j}
\end{equation}
where $\relu{\cdot}$ denotes $\max\{\cdot, 0\}$. The optimal dual
solution corresponds to a price that balances demand and
supply, which is the central concept in this paper.
\begin{definition}
\label{def:clearing-price}
  A price $p^*$ is a \emph{clearing price} if, for any optimal
  solution $(\bm{x}^*,\bm{y}^*)$ to the allocation problem, we have
  \begin{eqnarray*}
    x_i^* & \in & \argmax_{x_i \in [0, \mu_i]} \: x_i(b_i - p) \\
    y_j^* & \in & \argmax_{y_j \in [0, \lambda_j]} \: y_j(p - c_j)
  \end{eqnarray*}
  for each buyer $i$ and seller $j$,
\end{definition}
In words, a clearing price balances supply and demand by ensuring
that, at an optimal allocation, each buyer buys a quantity that
maximizes its utility (value minus price), and similarly each seller
sells a quantity that maximizes its profit (price minus cost). In the
current simple setup with a single item, buyer $i$ will buy $\mu_i$
units if $b_i > p$, zero units if $b_i < p$, and is indifferent to the
number of units bought at $p = b_i$; similarly for each seller $j$.
However, the concept of clearing prices---where each agent maximizes
its utility at the optimal allocation---generalizes to much more
complex allocation problems with multiple differentiated items and
nonlinear valuations over bundles of
items~\cite{bikhchandani1997competitive}.

The fact that a clearing price exists, and can be obtained by
solving~(\ref{eq:clearing-loss}), follows from standard LP duality.
The complementary slackness conditions relating optimal primal
solution $(\bm{x}^*,\bm{y}^*)$ to optimal dual solution $p^*$ amount
to the conditions of Definition~\ref{def:clearing-price}.
The optimal solution $p^*$ to the dual corresponds to a Lagrange
multiplier for constraint~(\ref{cons:supply-demand}) which equates
demand and supply.

\subsection*{Learning Formulation}

To cast market clearing in a learning context, we consider a generic
feature space $\fspace$ with the label space $\tspace = \R_+^n \times
\R_+^m$ consisting of bid and ask vectors $(\bids, \asks)$.
For the sake of simplicity, we develop our framework assuming that the
number of buyers and sellers remains fixed (at $n$ and $m$), and that
the item quantity that each agent demands or supplies ($\mu_i$ or
$\lambda_j$) is also fixed. This information is straightforward to
incorporate into the label space if needed, and our results can be
adapted accordingly. The objective is to fit a price predictor (also
called a pricing policy)
$p : \fspace \rightarrow \R$ to a training set of data
$\{(z_k, \bids_k, \asks_k)\}$ drawn from
$\fspace \times \tspace$, to achieve good prediction performance on
separate test data drawn from the same distribution as the training
data.

As a concrete example, the training data could consist of bids and
asks for a stock on a financial exchange throughout time, and the
features might be recent economic data on the company, time of day or
week, etc. The clearing problem here is equivalent to predicting a
price within each datapoint's bid-ask spread given the features. As
another example, the data could consist of bids for ad impressions on
a display ad exchange, and the features might be contextual
information about the website (e.g., topic) and user (e.g., whether
she is on mobile or desktop). The clearing problem there reduces to
predicting a price between the highest and second-highest bids.

Based on our developments so far, the correct loss function to fit
clearing prices is given by~(\ref{eq:clearing-loss}), which we call
the \emph{clearing loss}:
\begin{equation*}
  \label{clearing-loss}
\ell^c(p, z, \bids, \asks) = \sum_{i=1}^n \mu_i\relu{b_i - p(z)} + \sum_{j=1}^m \lambda_j\relu{p(z) - c_j}
\end{equation*}
Figure~\ref{fig:clearing-loss} illustrates the shape of the clearing
loss (in green) under an instance with buyers
$(\$1, 1),\, (\$4, 1),\, (\$5, 2)$ and sellers $(\$2, 1),\, (\$3, 1)$.
Note that although the first buyer's bid of $\$1$ lies below any of
the sellers' costs, it still contributes to the shape of the loss.
Here any price between $\$4$ and $\$5$ is a clearing price. If we add
an extra buyer $(\$6, 1)$, the loss curve tilts to the right (in blue)
and the unique clearing price becomes $\$5$; since there is more
demand, the clearing price increases. If we instead add an extra
seller $(\$2, 2)$, the curve tilts to the left (in pink) and the
clearing price decreases; now any price between $\$3$ and $\$4$ is a
clearing price. This example hints at a way to control the
aggressiveness of the price function $p$ fit to the data, by
artificially adjusting demand or supply.
\begin{figure}[t!]
  \begin{center}
    \includegraphics[width=0.8\textwidth]{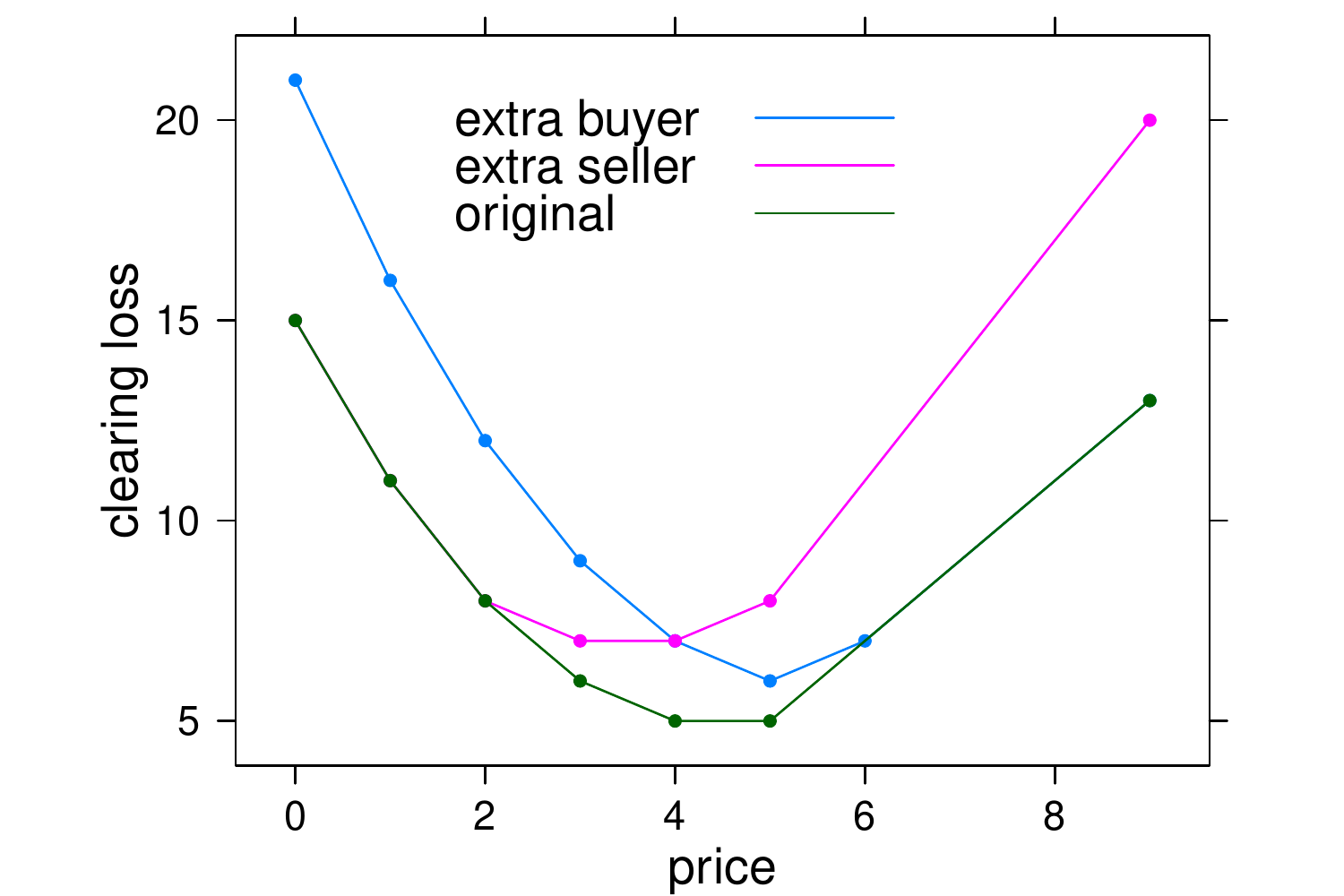}
  \end{center}
  \caption{Effect on the shape of the clearing loss when adding a
    buyer or a seller.}\label{fig:clearing-loss}
\end{figure}

Over a training set of data $\{(z_k, \bids_k, \asks_k)\}$, model
fitting consists of computing a pricing policy $p$ that minimizes the
overall loss $\sum_k \ell^c(p, z_k, \bids_k, \asks_k)$. Under a
limited number of contexts $z_k$, it may be possible to directly
compute optimal clearing prices, or even revenue-maximizing
reserve prices, based on the bid distributions in each
context~\cite{cole2014sample,myerson1981optimal}. But this kind of
nonparametric approach quickly runs into difficulties when there is a
large number of contexts or even continuous features, where issues of
data sparsity and discretization arise. Our formulation allows one to
impose some structure on the pricing policy (e.g., a linear model or
neural net) whenever this aids with generalization.

From a learning perspective, clearing loss has several attractive
properties. It is a piece-wise linear convex function of the price,
where the kink locations are determined by the bids and asks. The
magnitude of its derivatives depends only on the buyer and seller
quantities, which makes it robust to any outliers in the bids or asks.
%gradient of the loss is excess demand, good for online learning
By its derivation via LP duality, its optimal value equals the optimal
gains from trade, which are easy to compute. This gives a reference
point to quantify how well a price function fits any given dataset.

\subsection*{Reserve Pricing}

As a practical application of the clearing loss, we consider the
problem of reserve pricing in a single-item, second-price auction. In
this setting every buyer demands a single unit ($\mu_i = 1$), and
there is a single seller ($m = 1$) with cost $c$. The seller also has
unit supply, but we still parametrize its quantity by $\lambda$ to
allow some control on the shape of the loss.

We write $\hb$ and $\hob$ to denote the highest and second-highest
bids, respectively. In a single-item second-price auction, the item is
allocated to the highest bidder as long as $\hb \geq c$, and is
charged $\secp \equiv \max\{\hob, c\}$.
Second-price auctions are extremely common and until now have been the
dominant format for selling display ads online through ad exchanges,
among countless other applications.
It is common in second-price auctions for the seller to set a
\emph{reserve price}, a minimum price that the winning bidder is
charged. The cost $c$ is itself a reserve price, but the seller may
choose to increase this to some price $p$ in an attempt to extract
more revenue, at the risk of leaving the item unsold if it turns out
that $\hb < p$. Revenue as a function of $p$ can be negated to define
a loss, which we denote $\ell^r$:
$$
-\ell^r(p, z, \bids, c) = \left\{
  \begin{array}{cl}
    \max\{p(z), \secp\} & \mbox{if $\max\{p(z), c\} \leq \hb$} \\
    c & \mbox{otherwise} \\
  \end{array}
\right.
$$
However, this loss is notoriously difficult to optimize directly,
because it is non-convex and even discontinuous, and its gradient is 0
except over a possibly narrow range between the highest and
second-highest bids. Clearing loss represents a promising alternative
for reserve pricing because any price between $\secp$ and
$\hb$ is a clearing price, so a correct clearing price prediction
should intuitively improve over the baseline of $c$. The clearing loss
in the auction setting takes the form:
\begin{equation}
  \label{clearing-loss-auction}
\ell^c(p, z, \bids, c) = \sum_{i=1}^n \relu{b_i - p(z)} + \lambda\relu{p(z) - c}
\end{equation}

In practical applications of reserve pricing it is often desirable to
achieve some degree of control over the \emph{match rate}---the
fraction of auctions where the item is sold---and the closely related
metric of \emph{social welfare}---the aggregate value of the items
sold, where value is captured by the winning bid $\hb$. Formally, these
concepts are defined as follows, where the notation $\pred{\cdot}$ is
1 if its predicate is true and 0 otherwise.
\begin{definition}
  On a single data point, the \emph{match rate} at price $p$ is
  $\mr(p) = \pred{\hb \geq \max\{p,c\}}$ and the \emph{social welfare} is
  $\sw(p) = \hb\pred{\hb \geq \max\{p,c\}}$.  
\end{definition}
As with the revenue objective, match rate and social welfare are
discontinuous and their gradients are almost everywhere 0, so they are
not directly suitable for model fitting via convex optimization (i.e.,
one has to look to surrogates).

Note that the clearing loss~(\ref{clearing-loss-auction}) effectively contains
a term that approximately regularizes according to match rate. The
seller's term $\relu{p - c}$ can be viewed as a hinge-type surrogate
for match rate, since any setting of $p$ above $c$ risks impacting
match rate. Increasing $\lambda$ improves match rate, in line with the
earlier economic intuition that increasing seller supply $\lambda$
shifts the clearing price downwards. Symmetrically, $\lambda$ can be
decreased within the range $[0,1]$ (the loss remains convex in this
range), which is equivalent to increasing each buyer's demand to
$\mu = 1/\lambda$. According to the economic intuition, this shifts
the clearing price upwards at the expense of match rate. The fact that
the relevant range and units of the regularization weight
$\lambda$ are understood is very convenient in practice. In the next
section, we derive a quantitative link between $\lambda$
and match rate.

%%%
%%%
%%%

\section{Theoretical Guarantees}

In this section we prove approximation guarantees on the match rate
and efficiency performance of models fit using the clearing loss.
The results of this analysis will provide guidelines for setting the
regularization parameters for fine-grained control of the match rate.
%
% In the next section, we will see how the conclusions here hold
% empirically on a real-world dataset. Finally,
%
% The ultimate test of whether models based on clearing loss are useful
% and effective will be the empirical comparison with other
% methodologies in the next section.

We begin by characterizing the optimal pricing policy under clearing
loss when there is no restriction on the policy structure, assuming
that bids and costs are drawn independently (but not necessarily
identically).
\begin{prop} If conditioned on each feature vector $z$ the bid and cost
  distributions are given by $b_i \sim F_i^z$ and $c_j \sim G_j^z$, then
  the pricing policy that optimizes clearing loss is the solution to
  $$\sum_i \mu_i (1-F_i^z(p(z))) = \sum_j \lambda_j G_j^z(p(z)),$$
  which is the policy that balances expected supply and demand.
\end{prop}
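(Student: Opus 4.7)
The plan is to exploit the fact that the pricing policy $p$ is completely unrestricted, so minimizing the expected clearing loss $\E[\ell^c(p,z,\bids,\asks)]$ decomposes into a pointwise minimization: for each feature vector $z$, choose the scalar $p(z)$ that minimizes the conditional expectation
$$L_z(p) \;=\; \sum_{i=1}^n \mu_i\, \E[\relu{b_i - p} \mid z] \;+\; \sum_{j=1}^m \lambda_j\, \E[\relu{p - c_j} \mid z].$$
Once this reduction is in place, the statement becomes a one-dimensional convex optimization per $z$.

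The key computational step is the standard identity for the expected positive part of an affine function of a random variable. For $b_i \sim F_i^z$ (conditional on $z$) and a scalar threshold $p$, one has $\E[\relu{b_i - p} \mid z] = \int_p^\infty (1 - F_i^z(t))\,dt$, and symmetrically $\E[\relu{p - c_j} \mid z] = \int_{-\infty}^p G_j^z(t)\,dt$. Differentiating these in $p$ (via the fundamental theorem of calculus) yields $-(1 - F_i^z(p))$ and $G_j^z(p)$ respectively. Substituting gives
$$\frac{dL_z}{dp}(p) \;=\; -\sum_{i=1}^n \mu_i \bigl(1 - F_i^z(p)\bigr) \;+\; \sum_{j=1}^m \lambda_j\, G_j^z(p),$$
so setting $dL_z/dp = 0$ reproduces exactly the balance equation in the statement.

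To conclude I would invoke convexity. Each summand $\relu{b_i - p}$ and $\relu{p - c_j}$ is a convex function of $p$, so the clearing loss is convex in $p$ for every realization, and hence so is the conditional expectation $L_z$. Therefore any stationary point is a global minimum, and the first-order condition above characterizes the optimum. A brief remark should handle existence: the left-hand side $\sum_i \mu_i(1 - F_i^z(p))$ is non-increasing in $p$ and tends to $\sum_i \mu_i$ as $p \to -\infty$ and to $0$ as $p \to \infty$, while the right-hand side $\sum_j \lambda_j G_j^z(p)$ is non-decreasing with the opposite limits, so by monotonicity they cross (or agree on an interval, in which case any crossing point is optimal).

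The only mild subtlety I anticipate is the interchange of differentiation and expectation needed to justify $\frac{d}{dp}\E[\relu{b_i - p}] = -\E[\pred{b_i > p}] = -(1-F_i^z(p))$; this is immediate from dominated convergence because the difference quotients of $\relu{\cdot}$ are bounded by $1$, so no real obstacle arises. The proof is essentially a pointwise first-order optimality argument married to the CDF identities for hinge losses.
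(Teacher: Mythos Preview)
Your proposal is correct and follows essentially the same approach as the paper: write the expected clearing loss conditional on $z$, differentiate in $p$, and set the derivative to zero. Your version is slightly more thorough (you make the pointwise reduction explicit, invoke convexity to pass from stationarity to optimality, argue existence of a crossing, and justify the interchange via dominated convergence), but the core argument is identical.
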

\begin{proof}
We can write the expectation of the market clearing loss function as follows:
  $$\begin{aligned} \E[\ell^c(p)] = & \sum_{i=1}^n \mu_i \int_{p}^{\infty} (b_i-p)
    \,\mathrm{d}F_i^z(b_i) \\
    %\end{aligned}$$\mbox{}
    %$$
    %\begin{aligned}
  + & \sum_{j=1}^m
  \lambda_j \int_0^p (p-c_j) \,\mathrm{d}G_j^z(c_j). \end{aligned}$$
  Taking the derivative with respect to $p$ and setting it to zero leads to the
  result in the statement:
  $$0 = \frac{\mathrm{d}}{\mathrm{d}p} \E[\ell^c(p)] = - \sum_{i=1}^n  \mu_i (1-F_i^z(p)) + \sum_{j=1}^m
  \lambda_j G_j^z(p).$$
\end{proof}
We now consider the single-item auction setting where $m = 1$ and
$\mu_i = 1$ for all buyers. For simplicity, also assume that $c=0$,
which implies $G_j(p) = 1$ for all $p$. In that case we can bound the
match rate by a simple formula.
% (which is a version of the correlation gap inequality
% \cite{agrawal2010correlation}):

\begin{prop} 
  \label{prop:mr-bound}
  In the setup with a single seller with $\lambda$ supply and cost
  $c = 0$, and independent buyer distributions, the expected match
  rate under the optimal clearing price policy is at least
  $1-e^{-\lambda}$.
\end{prop}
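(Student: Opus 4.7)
The plan is to leverage the previous proposition to characterize the optimal clearing price $p^\ast(z)$ and then bound the expected match rate by exploiting an elementary exponential inequality. Throughout, I condition on the feature vector $z$ and drop it from the notation; the resulting pointwise bound will yield the stated bound after taking expectation over $z$.

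First I would plug in the special assumptions. Since $m=1$, $\mu_i=1$, and $c=0$ (so $G(p)=1$ for all $p \geq 0$), the optimality condition from the previous proposition specializes to
\begin{equation*}
\sum_{i=1}^n \bigl(1 - F_i(p^\ast)\bigr) \;=\; \lambda.
\end{equation*}
Next I would compute the expected match rate at $p^\ast$. Since $\mr(p^\ast) = \pred{\hb \geq p^\ast}$ and the bids are independent,
\begin{equation*}
\E[\mr(p^\ast)] \;=\; \P\bigl[\hb \geq p^\ast\bigr] \;=\; 1 - \prod_{i=1}^n F_i(p^\ast).
\end{equation*}
So the goal reduces to showing $\prod_i F_i(p^\ast) \leq e^{-\lambda}$.

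The key step is the elementary inequality $\ln x \leq x - 1$ for $x \in (0,1]$ (with the convention $\ln 0 = -\infty$, so the inequality is trivial when some $F_i(p^\ast)=0$). Applying it termwise,
\begin{equation*}
\ln \prod_{i=1}^n F_i(p^\ast) \;=\; \sum_{i=1}^n \ln F_i(p^\ast) \;\leq\; \sum_{i=1}^n \bigl(F_i(p^\ast) - 1\bigr) \;=\; -\lambda,
\end{equation*}
using the optimality condition in the last equality. Exponentiating gives $\prod_i F_i(p^\ast) \leq e^{-\lambda}$, so the conditional expected match rate is at least $1 - e^{-\lambda}$. Taking the expectation over $z$ preserves this lower bound.

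I do not anticipate any real obstacle: the argument is essentially a one-line convexity/concavity calculation once the optimality condition is in hand. The only subtle point is ensuring the previous proposition is applicable as stated, i.e.\ that an interior optimizer exists so the first-order condition holds; in the degenerate case where the balance equation has no interior solution (e.g.\ $\lambda \geq n$), the optimal policy sets $p^\ast = 0$, the match rate is $1$, and the bound $1 - e^{-\lambda}$ holds trivially.
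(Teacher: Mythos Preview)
Your proof is correct and follows the same overall structure as the paper's: invoke the first-order optimality condition $\sum_i (1-F_i(p^\ast))=\lambda$, express the match rate as $1-\prod_i F_i(p^\ast)$, and bound the product by $e^{-\lambda}$. The only difference is in how that last bound is obtained. The paper routes through the AM--GM inequality,
\[
\prod_{i=1}^n F_i(p^\ast) \;\leq\; \Bigl(\tfrac{1}{n}\sum_{i=1}^n F_i(p^\ast)\Bigr)^n \;=\; \Bigl(1-\tfrac{\lambda}{n}\Bigr)^n \;\leq\; e^{-\lambda},
\]
whereas you apply $\ln x \le x-1$ termwise and sum. Your route is slightly more direct (one inequality instead of two) and sidesteps the intermediate dependence on $n$; the paper's route has the minor advantage of displaying the sharper intermediate bound $(1-\lambda/n)^n$, which is what becomes exact in the i.i.d.\ case. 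Both arguments are ultimately the same concavity-of-log fact, so this is a cosmetic rather than substantive difference.
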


\begin{proof}
  A transaction clears if there is at least one buyer with valuation above the
  price $p$ which happens with probability $1-\prod_{i=1}^n F_i^z(p)$. Since the
  optimal policy $p$ is the solution of $\sum_{i=1}^n (1-F_i^z(p)) = \lambda$ by
  the previous proposition, we can bound the match rate as follows:
  $$\begin{aligned}
    \E[\mr] & = 1-\prod_{i=1}^n F_i^z(p)  \geq 1-\left[ \frac{1}{n}
    \sum_{i=1}^n F_i^z(p)
    \right]^n \\ &= 1-\left[ 1 - \frac{\lambda}{n} \right]^n \geq 1-e^{-\lambda}
  \end{aligned}$$
  where the first inequality follows from the arithmetic-geometric mean
  inequality.
\end{proof}

The preceding proposition provides a useful guideline on how to set the
regularization parameter $\lambda$ to achieve a certain target match
rate. We can also obtain a similar bound for social welfare:
\begin{cor} In the setting of the previous proposition, the social
  welfare $\E[\sw] = \E[\hb \cdot \pred{\hb \geq p}]$ obtained by the optimal
  clearing price policy is at least $1-e^{-\lambda}$ of the optimal
  social welfare, obtained by setting no reserves.
\end{cor}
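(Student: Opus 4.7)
The plan is to reduce the social welfare bound to the match rate bound already established in Proposition~\ref{prop:mr-bound}, by exploiting the fact that the event $\{\hb \geq p\}$ is the \emph{upper tail} of the distribution of $\hb$ and hence has conditional mean at least $\E[\hb]$. First I would note that with no reserve the item always sells to the highest bidder (since $c = 0$), so the benchmark optimal welfare is simply $\E[\hb]$.

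Next I would rewrite the welfare under the optimal clearing policy as
\begin{equation*}
\E[\sw] \;=\; \E\bigl[\hb \cdot \pred{\hb \geq p}\bigr] \;=\; \E[\hb \mid \hb \geq p] \cdot \P[\hb \geq p].
\end{equation*}
The key step is the inequality $\E[\hb \mid \hb \geq p] \geq \E[\hb]$. This is a standard truncation fact: the law of total expectation writes $\E[\hb]$ as a convex combination of $\E[\hb \mid \hb \geq p]$ and $\E[\hb \mid \hb < p]$, and the former is $\geq p$ while the latter is $\leq p$, so $\E[\hb]$ lies between them. Substituting yields
\begin{equation*}
\E[\sw] \;\geq\; \E[\hb]\cdot \P[\hb \geq p].
\end{equation*}

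Finally, I would invoke exactly the chain of inequalities from the proof of Proposition~\ref{prop:mr-bound}: using the optimality condition $\sum_i (1 - F_i^z(p)) = \lambda$, the AM-GM inequality, and $(1-\lambda/n)^n \leq e^{-\lambda}$, one gets $\P[\hb \geq p] = 1 - \prod_i F_i^z(p) \geq 1 - e^{-\lambda}$. Combined with the previous display, this gives $\E[\sw] \geq (1 - e^{-\lambda})\E[\hb]$, as required.

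The only substantive step is the truncation inequality $\E[\hb \mid \hb \geq p] \geq \E[\hb]$, which however is elementary; everything else is a direct reuse of the match rate argument. So I do not expect any real obstacle—the main content of the corollary is that for a nonnegative random variable the welfare loss from imposing a reserve is no worse than the match rate loss, because the mass discarded by the reserve is the lower tail.
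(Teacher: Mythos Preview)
Your proposal is correct and is essentially the same argument as the paper's: the paper's one-line proof is exactly $\E[\hb \cdot \pred{\hb \geq p}] \geq \E[\hb]\cdot \P[\hb \geq p] \geq (1-e^{-\lambda})\,\E[\hb]$, which is your conditional-mean/truncation step followed by the match-rate bound from Proposition~\ref{prop:mr-bound}. Your write-up just makes explicit why $\E[\hb \mid \hb \geq p] \geq \E[\hb]$, which the paper leaves implicit.
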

\begin{proof}
  This follows from the fact that $\E[\hb \cdot \pred{\hb \geq p}] \geq
  \E[\hb] \cdot \P[\hb \geq p] \geq (1-e^{-\lambda}) \cdot \E[\hb]$.
\end{proof}

Another interesting corollary is that when buyers are i.i.d., fitting a clearing price is equivalent to fitting a certain quantile of the common bid distribution.
\begin{cor} In the setup of the previous proposition with i.i.d.\
  buyers, the optimal clearing price policy is to set the price at
  $p(z) = F^{-1}(1-\lambda/n)$ where $F = F_i^z$.
\end{cor}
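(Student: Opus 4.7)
The plan is to derive this directly from the first-order condition established in the preceding Proposition (the one characterizing optimal pricing policies under clearing loss), specialized to the current single-seller, zero-cost, i.i.d.\ setting. The corollary is essentially a substitution exercise: there is no new analytic idea required, just plugging the i.i.d.\ assumption into the balance equation.

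First, I would recall the setup assumed in Proposition~\ref{prop:mr-bound}: a single seller ($m=1$) with supply $\lambda$ and cost $c=0$, which makes $G_1^z(p) = 1$ for all $p \geq 0$, and unit-demand buyers ($\mu_i = 1$). Substituting these into the balance equation from the earlier Proposition gives
\[
\sum_{i=1}^n (1 - F_i^z(p(z))) = \lambda.
\]
Next, I would invoke the i.i.d.\ assumption, so that $F_i^z = F$ for a common distribution $F$. The sum collapses into $n(1 - F(p(z))) = \lambda$, which rearranges to $F(p(z)) = 1 - \lambda/n$.

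Finally, applying $F^{-1}$ to both sides yields $p(z) = F^{-1}(1 - \lambda/n)$, as claimed. The only mild subtlety worth noting (though not really an obstacle) is the invertibility of $F$: if $F$ is not strictly increasing, $F^{-1}$ should be read as the generalized (quantile) inverse $F^{-1}(u) = \inf\{x : F(x) \geq u\}$, and any such $p(z)$ is a minimizer of the expected clearing loss by convexity. Since the parent proposition was stated in terms of a first-order condition, this quantile interpretation is already implicit there, so no extra argument is needed.
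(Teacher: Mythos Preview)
Your proposal is correct and follows exactly the intended derivation: the paper gives no explicit proof for this corollary, treating it as immediate from the balance equation $\sum_i (1-F_i^z(p)) = \lambda$ established in the preceding propositions, which under the i.i.d.\ assumption collapses to $n(1-F(p)) = \lambda$ and inverts to $p = F^{-1}(1-\lambda/n)$. Your remark about reading $F^{-1}$ as the generalized quantile inverse is a reasonable clarification that the paper leaves implicit.
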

This result makes explicit how varying $\lambda$ in the clearing loss
tunes the aggressiveness of the resulting price function, by moving up
or down the quantiles of the bid distribution. In particular, it's
possible to span all quantiles using $\lambda \in [0,+\infty]$.
Fitting clearing prices is not exactly equivalent to quantile
regression, since the relevant quantile depends on the number of
buyers, which is a property of the data and not fixed in advance.

%%% Local Variables:
%%% mode: latex
%%% TeX-master: "tfclearing"
%%% End:

%%%
%%%
%%%

\section{Empirical Evaluation}

In this section we evaluate our approach of using predicted clearing
prices as a reserve pricing policy in second-price auctions. We
collected a dataset of auction records by sampling a fraction of the
logs from Google's Ad Exchange over two consecutive days in
January 2019. Our sample contains over 100M records for each day. In
display advertising, online publishers (e.g., websites like
nytimes.com) can choose to request an ad from an exchange when a user
visits a page on their site. The exchange runs a second-price auction
(the most common auction format) among eligible advertisers, possibly
with a reserve price.

%In the data a relatively small fraction of auctions had more than 5
%bidders, so 
We clip bid vectors to the 5 highest bids. As the publisher cost $c$,
we use a reserve price available in the data which is meant to capture
the opportunity cost\footnote{A common alternative source
  of display ads besides exchanges are reservation contracts, which
  are advertiser-publisher agreements to show a fixed volume of ads
  for a time period. If the contract is not fulfilled, this comes at a
  penalty to the publisher.}
of not showing ads from other sources besides the exchange, in line
with our model.\footnote{We also excluded additional sources of reserve
  prices from the dataset: (a) reserve prices configured by
  publishers reflecting business objectives like avoiding channel conflict
  (i.e., protecting the value of inventory sold through other means)
  and (b) automated reserve prices set by the exchange.}
Reserve prices are only relevant conditional on the top bid exceeding
the publisher cost, so the auction records were filtered to satisfy
this condition. When reporting our results this means that the
baseline match rate without any reserve pricing is 100\%, so we will
refer to it as \emph{relative} match rate in our plots to emphasize
this fact.

All the models we evaluate\footnote{We evaluate the models by
  simulating the effect of the new reserves on a test dataset. The
  simulation does not take into account possible strategic responses
  on the part of buyers. However, since the auction format is a second
  price auction, it is a dominant strategy for the buyers to bid
  truthfully.} are linear models of
the price $p$ as a function of features $z$ of the auction records.
The only difference between the models is the loss function used to
fit each one, to focus on the impact of the choice of loss function.
The features we used included: publisher id, device type (mobile,
desktop, tablet), OS type (e.g., Android or iOS), country, and format
(video or display). For sparse features like publisher id we used a
one-hot encoding for the most common ids and an `other' bucket for ids
in the tail. The models were all fit using TensorFlow with the default
Adam optimizer and minibatches of size 512 distributed over 20
machines. An iteration corresponds to one minibatch update in each
machine, therefore $20 \times 512$ data points. The models were all
trained over at least 400K iterations, although for some models convergence
occurred much earlier.

Besides the clearing loss used to fit our model, we considered several
other losses as benchmarks:
\begin{itemize}
\item Least-squares regression on the highest bid $\hb$.
\item Least-squares regression on the 2nd-highest bid $\hob$.
\item A revenue surrogate loss function proposed by~\citet{medina2014learning} as a
  continuous alternative to the pure revenue loss $\ell^r$ mentioned
  previously:
$$
-\ell^\gamma(p, z, \bids,c) =
\left\{
    \begin{array}{ll}
      \max\{p(z),\secp\} & \hspace{-15pt}\mbox{if $p(x) \leq b_1$} \\
       c & \hspace{-45pt}\mbox{if $p(x) > (1+\gamma)b_1$} \\
      ((1+\gamma)b_1 - p(x))/\gamma  & \hspace{-5pt}\mbox{otherwise}\\%\mbox{if $b_1 < p(x) \leq (1+\gamma)b_1$} \\
%      0 & \mbox{otherwise} \\
    \end{array}
    \right.
$$
The loss has a free parameter $\gamma > 0$ which can be tuned to
control the approximation to $\ell^r$. Although this loss is
continuous, it is still non-convex. In our experiments we tried a
range of $\gamma \in \{0.25, 0.5, 0.75, 1\}$. Below we report on the
setting $\gamma = 0.75$ which gave the best revenue performance.
\end{itemize}

For each loss function we added the match-rate regularization
$\lambda \relu{p - c}$, and we varied $\lambda$ to span a range of
realized match rates. Recall that this regularization is already
implicit in the clearing loss, where $\lambda$ can be construed as the
item quantity supplied by the seller. We used non-negative $\lambda$
to ensure that convexity is preserved if the original loss is itself
convex.

We used the first day of data as the training set and the second day
as the test set.
The performance was very similar on both for all fitted models, which is
expected due to the volume of data and the generalization properties
of this learning problem \cite{morgenstern2015pseudo}.
We report results over the test set below.

\subsection*{Revenue Performance}

\begin{figure*}[th!]
  \centering
  \begin{subfigure}[h]{0.8\textwidth}
  	\centering
    \includegraphics[width=\textwidth]{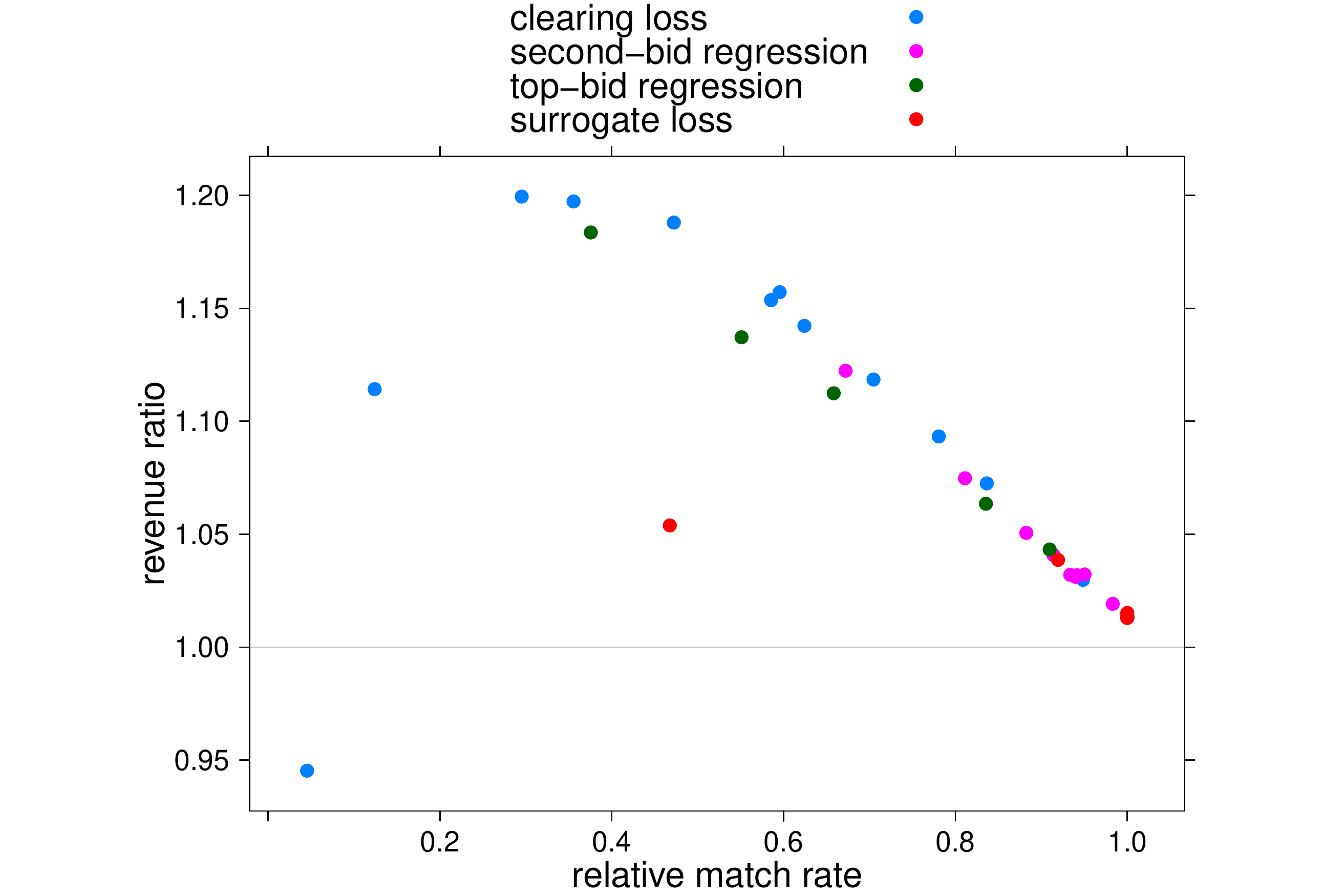}
    %\caption{Revenue ratio vs. relative match rate}
  \end{subfigure}
\vskip 15pt
  \begin{subfigure}[h]{0.8\textwidth}
  	\centering
    \includegraphics[width=\textwidth]{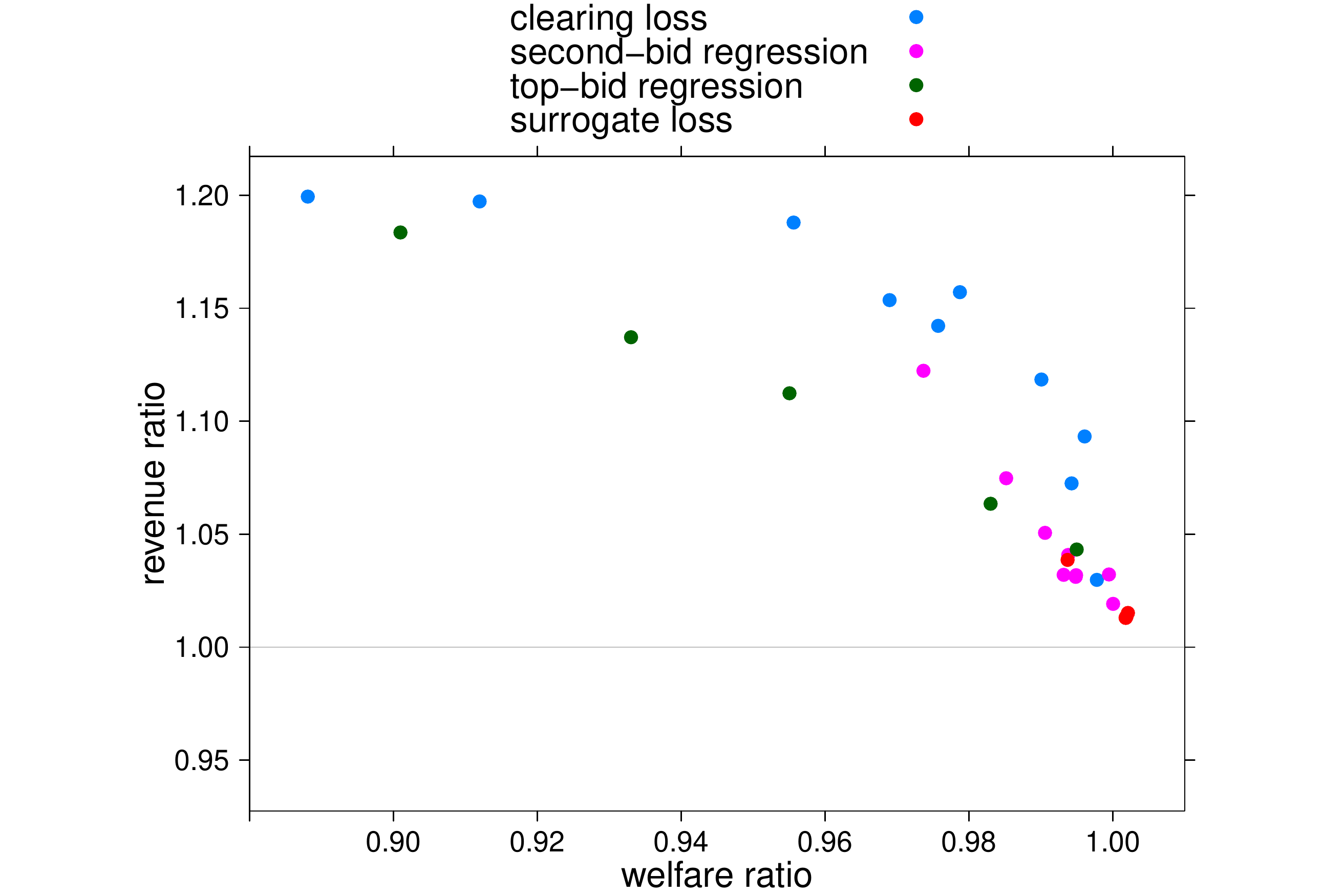}
    %\caption{Revenue ratio vs. welfare ratio}
  \end{subfigure}
  \caption{Trade-off between revenue improvement and decrease in match
    rate (top) or buyer welfare (bottom). Each point represents the
    performance of the fitted model under a loss function for a fixed
    regularization level.}
  \label{fig:revenue-ratio-tradeoff}
\end{figure*}

We first consider the revenue performance of the different losses as
it trades off against match rate and buyer welfare.
Figure~\ref{fig:revenue-ratio-tradeoff} plots the ratio of realized
revenue with learned reserves against the realized match rate (top).
Both axes are normalized by the revenue and match rate of the
second price auction using only the seller's cost as reserves. Each
point represents a pair of revenue and match rate or welfare achieved
at a certain setting $\lambda$. The most immediate observation is that
the curve traced out by the clearing loss Pareto dominates the
performance of the benchmark loss functions, in the sense that for any
fixed match rate, the clearing loss' revenue performance lies higher
than the others. The best revenue performance is a $20\%$ improvement
achieved by the clearing loss at $\lambda = 0.25$ with a match rate of
$30\%$.

We also plot in the figure the revenue against welfare (bottom). We
again normalize each axis by the revenue and welfare of the auction
that uses only seller's costs (which achieves the optimal social
welfare). For the sake of clarity the range of the x-axis has been
clipped. The Pareto dominance here is even more pronounced, and it's
also striking to note that clearing loss can achieve revenue
improvements of over $10\%$ with less than $2\%$ impact on buyer
welfare.

Another interesting aspect of Figure~\ref{fig:revenue-ratio-tradeoff}
is the range of match rates spanned by the different losses. Recall
that, under the assumptions and results of
Proposition~\ref{prop:mr-bound}, varying $\lambda$ from 0 to large
values should allow the clearing loss to span the full range of match
rates in $(0,1)$, and this is borne out by the plot. For the
regressions on $\hb$ and $\hob$, there is a hard floor on the match
rate that they can achieve with $\lambda = 0$, respectively at 0.38
and 0.67. Another kind of regularization term would be needed to push
these further downward and reach more aggressive prices. Match rate
for the surrogate loss was particularly sensitive to regularization.
Over a range of $\lambda$ spanning from 0 to 1, only $\lambda =0$ and
$\lambda = 0.1$ yielded match rates below 1, at 0.47 and 0.92
respectively.

\subsection*{Controlling Match Rate}

In practice setting the right regularization weight $\lambda$ to
achieve a target match rate is usually process of trial and error, even to
determine the relevant range to inspect, and this was the case for all
the benchmark losses. For the clearing loss, however,
Proposition~\ref{prop:mr-bound} gives a link between match rate and
$\lambda$ which can serve as a guide. Specifically, the result
prescribes $\lambda = \log(\frac{1}{1-\mr})$ to achieve a match rate
of $\mr$.

\begin{figure}[h!]
	\centering
    \includegraphics[width=0.8\textwidth]{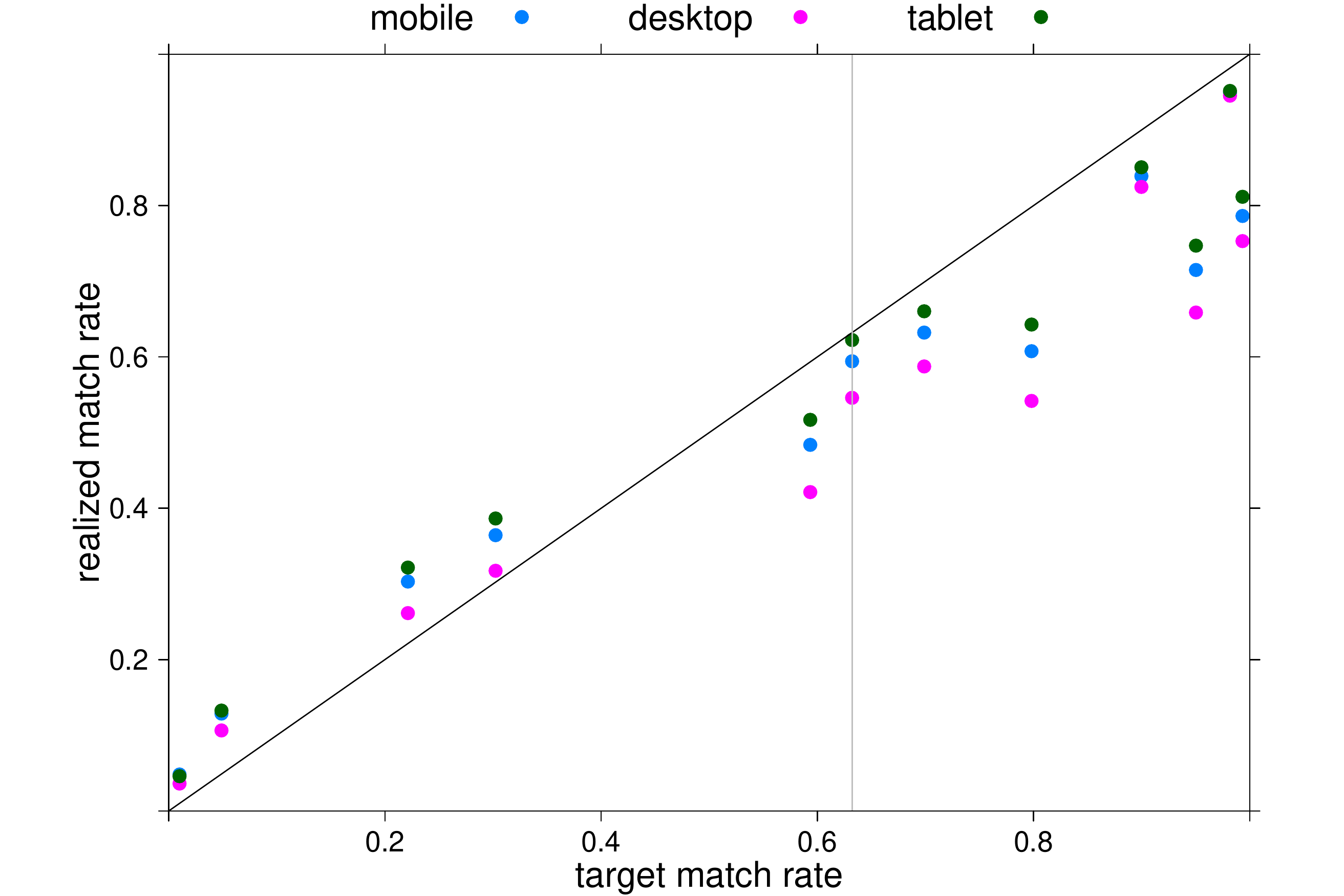}
  \caption{Realized match rate against target match rate under the
    model fit with the clearing loss, broken down by device type. The
    vertical line denotes the parameter setting $\lambda = 1$ with a
    target match rate of $1 - 1/e \approx 0.63$.}
  \label{fig:target-match-rate}
\end{figure}

Figure~\ref{fig:target-match-rate} plots the target match rate implied
by the settings of $\lambda$ that we used, according to this
formula, against realized match rates. The vertical line shows the
reference point of $\lambda = 1$, which is the ``default'' form of the
clearing loss without artificially increasing or limiting supply, with
an associated match rate $1-1/e \approx 0.63$. The realized match rate
tracks the target fairly well but not perfectly. A possible reason for
the discrepancy is that the assumption of i.i.d.\ bidders that the
formula relies on may not hold in practice. Another possible reason is that the linear model may not be
expressive enough to fit the optimal price level within each feature
context $z$.
Interestingly, the target match rate from Proposition~\ref{prop:mr-bound} tracks
not only the overall match-rate but also segment-specific match rate. In
Figure~\ref{fig:target-match-rate}, we break down the match rates by device
type and find that they are very consistent across devices.

\subsection*{Convergence Rate}

We next consider the convergence rates of model-fitting under the
various loss functions, plotted in Figure~\ref{fig:convergence-rate}.
Convergence rates for the clearing loss and the regression losses are
very comparable. The main difference between the curves has to do with
initialization. Initial prices tended to be high under our random
initialization scheme, which is more favorable to regression on the
highest bid. All models have converged by 100K iterations.
Since square loss is ideal from an optimization perspective, these
results imply that models with clearing loss can be fit very quickly
and conveniently in practice, in a matter of hours over large display
ad datasets.

\begin{figure}[t!]
	\centering
    \includegraphics[width=0.7\textwidth]{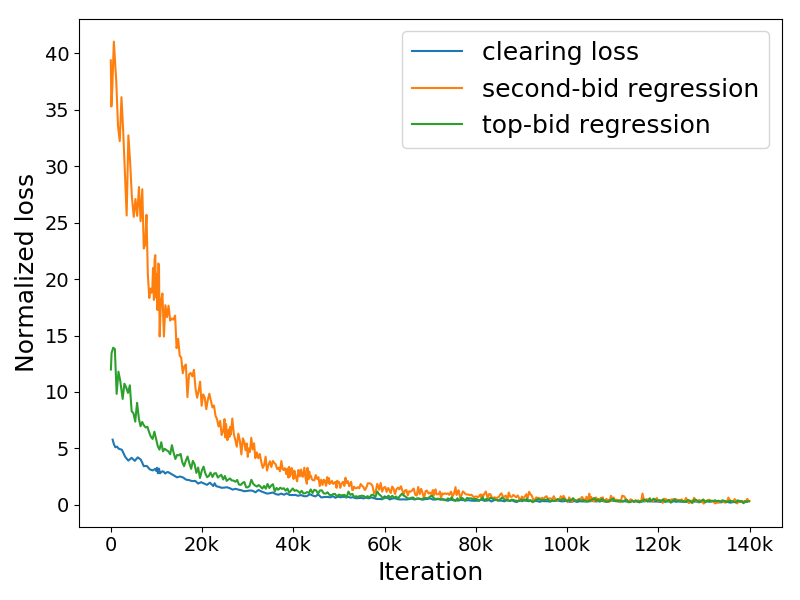}
%  \vspace{-20pt}
  \caption{Convergence rate of the model under different loss
    function, in minibatch iterations. We plot the value of each loss
    across iterations normalized by its value upon convergence.}
  \label{fig:convergence-rate}
  \vspace{-12pt}
\end{figure}

In Figure \ref{fig:convergence-rate-surrogate} we compare the
convergence of the clearing loss with the surrogate loss. Convergence
is much slower under surrogate loss. This was expected, as the loss is
nonconvex and it has ranges with 0 gradient where the Adam optimizer
(or any of the other standard TensorFlow optimizers) cannot make
progress; it was nonetheless an important benchmark to evaluate since
it closely mimics the true revenue objective.
\citet{medina2014learning} discuss alternatives for optimizing the
surrogate loss, and propose a special purpose algorithm based on
DC-programming (difference of convex functions programming), but they
only scale it to thousands of training instances.
The fact that the surrogate loss has not quite converged after 400K
iterations is a contributing factor to its revenue performance in
Figure~\ref{fig:revenue-ratio-tradeoff}.

\subsection*{Effectiveness of Linear Regression}

While the key take-away of our empirical evaluation is the fact that
the clearing loss dominates other methods in terms of revenue vs.\
match rate trade-offs, another surprising consequence of this study is
the effectiveness of using a simple regression on the top bid. The
natural intuition would be that any least-squares regression should perform
poorly since it has the same penalty for underpricing (which is a small
loss in revenue) and overpricing (which can cause the transaction to fail
to clear and hence incur in a large revenue loss). Indeed it is the
case that an unregularized regression (the leftmost green point on
Figure~\ref{fig:revenue-ratio-tradeoff}) incurs a large match rate
loss, but it also achieves significant revenue improvement (albeit
with an almost 5\% loss in social welfare compared to the clearing
loss).

Looking into the data, we found that an explanation for this fact is
that bid distributions tend to be highly skewed which causes
standard regression to underpredict for high bids and overpredict for
low bids. In fact, under zero regularization the linear regression on
the top bid underpredicts 17.7\% of instances for bids below the median
and 99.1\% for bids above the median. This type of behavior explains
why standard regression can be effective in practice despite the fact
that square loss does not
encode any difference between underpredicting and overpredicting.

\begin{figure}[t!]
	\centering
    \includegraphics[width=0.7\textwidth]{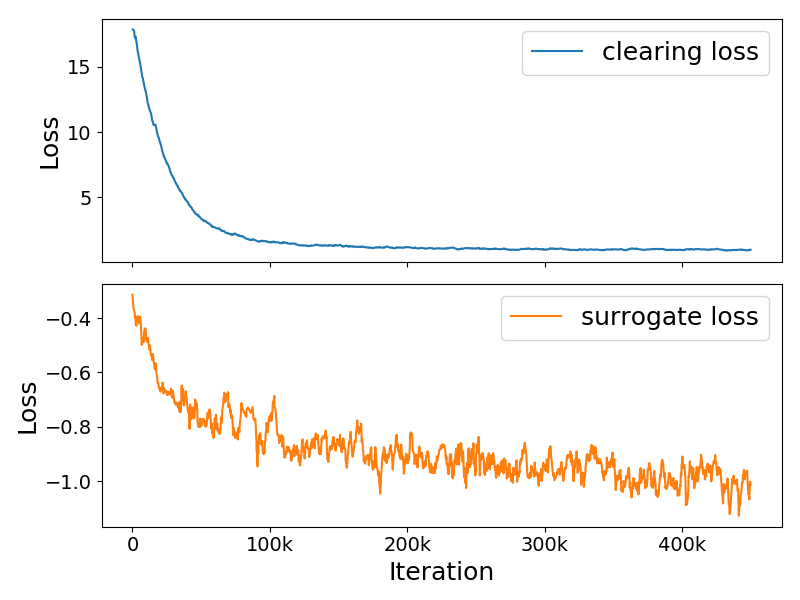}
  \caption{Convergence rate of the model under clearing and surrogate
    function, in minibatch iterations. Both loss functions are
    smoothed using a $0.9$ moving average.}
  \label{fig:convergence-rate-surrogate}
  \vspace{-12pt}
\end{figure}

%%%
%%% Conclusions
%%%
\section{Conclusions}

This paper introduced the notion of a predictive model for clearing
prices in a market with bids and asks for units of an item. The loss
function is obtained via the linear programming dual of the associated
allocation problem. When applied to the problem of revenue
optimization via reserve prices in second-price auctions, regularizing
the loss has an intuitive interpretation as expanding or limiting
supply, which can be formally linked to the expected match rate. Our
empirical evaluation over a dataset of bids from Google's Ad
Exchange confirmed that a model of clearing prices
outperforms standard regressions on bids, as well as a surrogate loss for the
direct revenue objective, in terms of the trade-off between revenue
and match rate (or social welfare).
%
% We believe that the approach should show promise in other contexts
% (e.g., bid data from other ad exchanges) since the improvements were
% due to the choice of loss function, rather than dataset-specific
% feature engineering. 
%
In future work, we plan to develop models of
clearing prices for more complex allocation problems such as search
advertising, where the clearing loss can be generalized (using the same
duality ideas presented in this paper) to handle a vector of position
prices.

\bibliography{pricing}
\bibliographystyle{plainnat}

\end{document}